\documentclass[letterpaper, 10 pt, conference]{ieeeconf}  % Comment this line out if you need a4paper

\IEEEoverridecommandlockouts                              % This command is only needed if 
\usepackage{graphics} % for pdf, bitmapped graphics files
\usepackage{epsfig} % for postscript graphics files
\usepackage{mathptmx} % assumes new font selection scheme installed
\usepackage{times} % assumes new font selection scheme installed
\usepackage{amsmath} % assumes amsmath package installed
\usepackage{amssymb}  % assumes amsmath package installed
\usepackage{url}
\usepackage{cite}
\newtheorem{assumption}{Assumption}
\newtheorem{lemma}{Lemma}
\newtheorem{definition}{Definition}
\newtheorem{theorem}{Theorem}
\newtheorem{rem}{Remark}
\title{\LARGE \bf
Robust Operational Space Control with Conformal Disturbance Bounds for Safe Redundant Manipulation
}

\author{Wenhua Liu$^{1}$, Fan Zhang$^{1}$, Qin Lin$^{1}$ % <-this % stops a space
\thanks{$^{1}$These authors are with the Department of Engineering Technology and the Department of Electrical and Computer Engineering, University of Houston, Houston, TX 77004, USA. {\tt\small \{wliu40, fzhang28\}@cougarnet.uh.edu; qlin12@uh.edu}}
\thanks{This material is based upon work supported by the National Science Foundation under Grants No. 2301543.}
}

\begin{document}

\maketitle
\thispagestyle{empty}
\pagestyle{empty}

%%%%%%%%%%%%%%%%%%%%%%%%%%%%%%%%%%%%%%%%%%%%%%%%%%%%%%%%%%%%%%%%%%%%%%%%%%%%%%%%
\begin{abstract}
Redundant robotic manipulators operating in constrained and human-interactive environments require accurate task-space tracking together with rigorous safety guarantees under dynamic uncertainties. Classical operational space computed torque controller (OSCTC) relies on accurate dynamic models and degrades in the presence of disturbances. In contrast, the data-driven paradigm of residual learning approximates disturbances as functions learned from full-state measurements, which are often noisy in practice, lack rigorous theoretical guarantees, and introduce additional design complexity. This paper proposes a robust OSCTC framework that integrates an extended state observer (ESO) with conformal prediction to combine model-based robustness and data-driven adaptability. The ESO estimates lumped disturbances directly in operational space without requiring full-state measurements as in residual learning, and a robust control barrier function (CBF) is constructed to enforce safety under uncertainty. However, robust CBFs require a known disturbance-variation bound to guarantee absolute safety, which often leads to conservatism in practice. To address this limitation, we further employ a sliding-window conformal prediction mechanism to estimate the bound online in a distribution-free manner, thereby achieving practical probabilistic safety guarantees. Experiments on a 7-DoF Franka Research 3 manipulator demonstrate millimeter-level tracking accuracy and real-time safe control at 1~kHz under various disturbances. Video: {\url{https://www.youtube.com/watch?v=KHs64uKjZ1w}}.
\end{abstract}

%%%%%%%%%%%%%%%%%%%%%%%%%%%%%%%%%%%%%%%%%%%%%%%%%%%%%%%%%%%%%%%%%%%%%%%%%%%%%%%%
\section{Introduction}
Redundant robotic manipulators, equipped with additional degrees of freedom, are increasingly deployed in constrained and human-interactive environments due to their enhanced dexterity and flexibility. By exploiting kinematic redundancy, they can simultaneously achieve obstacle avoidance, posture optimization, and safe human-robot collaboration.

For robotic manipulators, control strategies are typically formulated either in joint space or directly in operational (task) space. Joint-space control requires mapping desired task-space trajectories to joint motions through \emph{inverse kinematics} (IK). For redundant manipulators, this mapping is not unique and requires additional redundancy resolution, which complicates the coordination of task-space objectives and constraints. In contrast, \emph{operational space computed torque controller (OSCTC)} \cite{Khatib1987OSC} enables direct regulation of end-effector motion in task space without solving IK, while preserving null-space freedom for secondary objectives such as posture optimization.

\begin{figure}[t]
    \centering
    \includegraphics[width=0.8\linewidth]{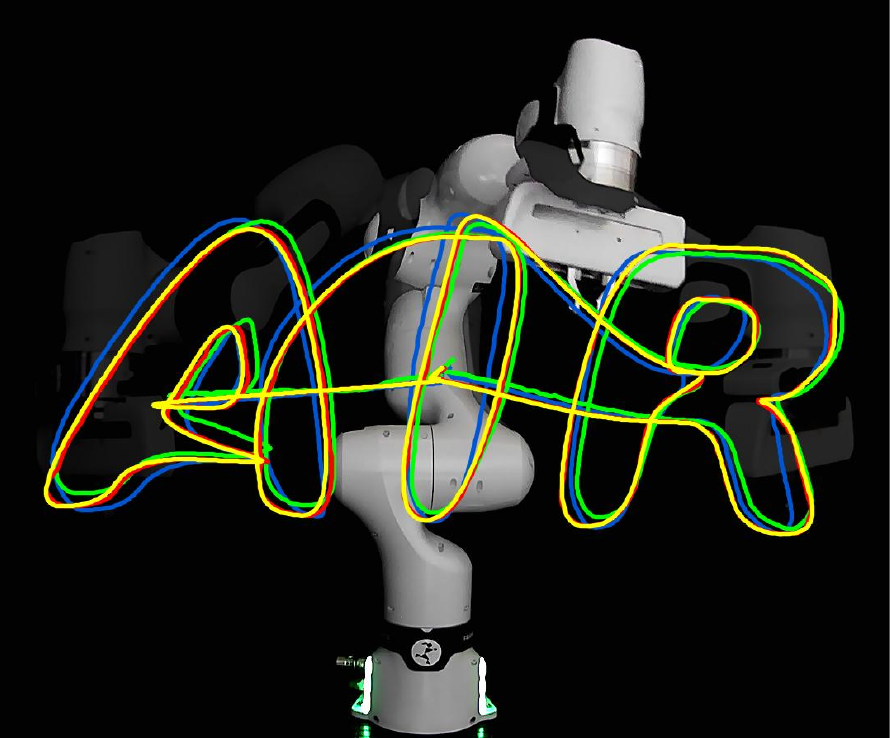}
    \caption{Task-space trajectory forming the letters “AIR” under model uncertainty. Yellow: reference trajectory. Blue: conventional OSCTC without disturbance compensation \cite{Khatib1987OSC}. Green: Residual learning. Red: proposed method, which has the minimum deviations caused by model mismatch in the hardware.}
    \label{fig:air_traj}
\end{figure}

However, classical OSCTC is essentially based on feedback linearization \cite{isidori1985nonlinear}. As such, it requires accurate knowledge of the system dynamics to achieve precise cancellation of nonlinear terms. When model uncertainties or disturbances are present, this assumption is violated, resulting in performance degradation and reduced robustness. This phenomenon is partially illustrated in Fig.~\ref{fig:air_traj},
where conventional OSCTC exhibits degraded tracking performance
in the presence of uncertainties. These uncertainties typically arise from both \emph{internal} and \emph{external} disturbances. \emph{Internal disturbances} are caused by imperfect dynamic modeling, payload variations, and unmodeled friction, while \emph{external disturbances} include environmental interactions such as unpredictable human-applied forces. Therefore, integrating operational space control with provably safe mechanisms under such disturbances remains a fundamental challenge.

%Redundant robotic manipulators operating in constrained and human-interactive environments must achieve accurate task-space tracking while guaranteeing safety under model uncertainty. Integrating operational-space control with formally certified safety in the presence of disturbances remains a fundamental challenge.

%\noindent

To enhance the robustness of OSCTC, two main approaches are commonly adopted. Data-driven or learning-based techniques \cite{wong2022oscar,o2022neural} approximate unknown dynamics but require full-state measurements, lack formal guarantees, and are sensitive to distribution shifts. Adaptive control schemes \cite{Seraji1989,tee2011adaptive,asar2019anfis,Feng1993} estimate structured uncertainties and are thus less effective against unstructured disturbances.

% To enhance the robustness of OSCTC against modeling errors and disturbances, two main approaches are commonly adopted. One direction integrates data-driven or learning-based techniques to approximate unknown dynamics \cite{wong2022oscar,o2022neural}; however, these methods require full-state measurements, lack formal guarantees, and can be sensitive to distribution shifts. Note that in Fig.~\ref{fig:air_traj}, the neural network requires task-specific retraining to achieve better performance and exhibits limited generalization across different trajectory-tracking scenarios in our experiments. The other direction incorporates adaptive control schemes \cite{Seraji1989,tee2011adaptive,asar2019anfis,Feng1993} to estimate uncertain parameters, which are typically assumed to be structured uncertainties and may be less effective against unstructured disturbances.

%Operational space control (OSC) \cite{Khatib1987OSC} provides a dynamically consistent framework for regulating end-effector motion while preserving null-space redundancy. When accurate dynamics are available, OSC achieves decoupled task-space tracking. However, practical manipulators are affected by friction, actuator dynamics, and modeling errors, particularly in closed-architecture systems~\cite{ZhangClosedArchitecture}, leading to disturbance-induced tracking and safety violations.

\begin{figure}[htbp]
    \centering
    \includegraphics[width=1\linewidth]{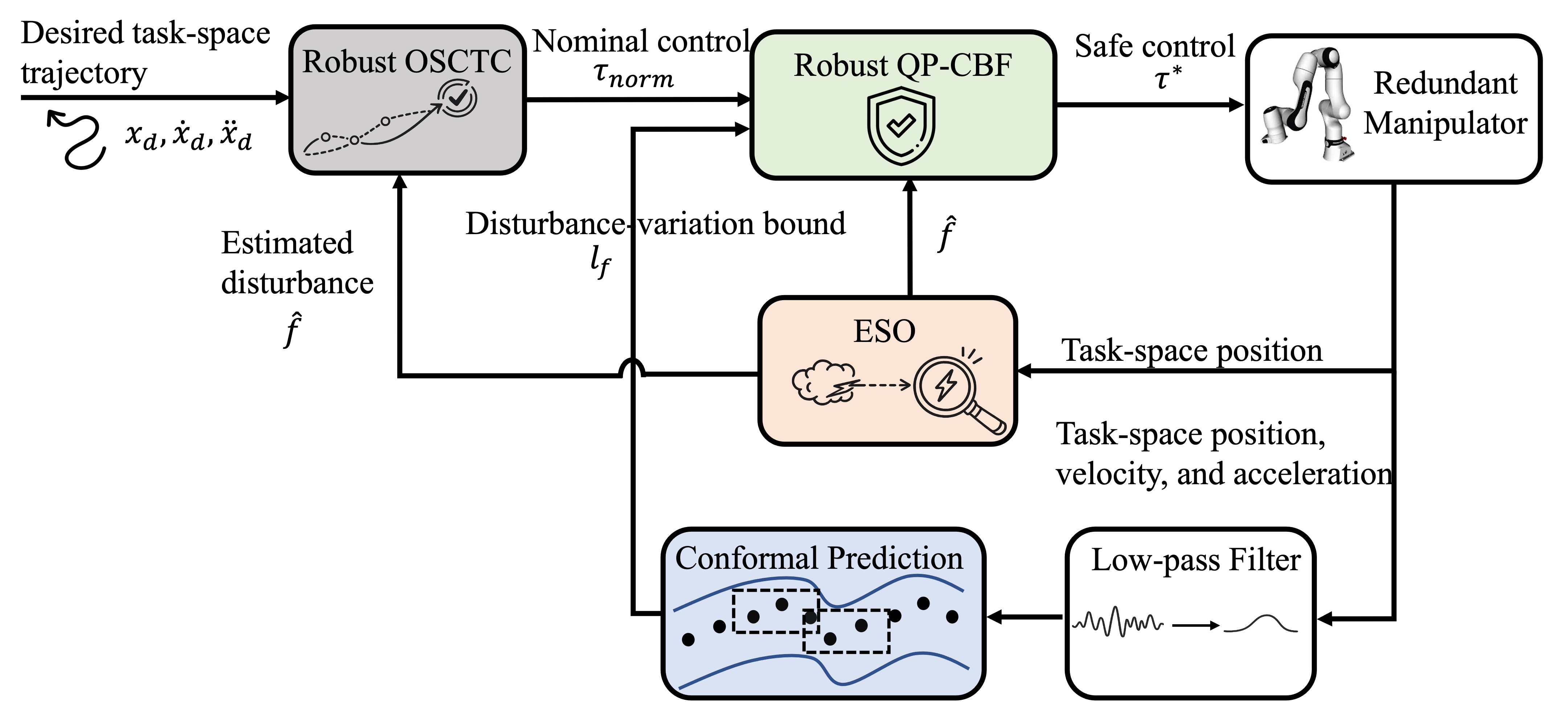}
    \caption{Architecture of the proposed robust operational-space control framework integrating model-based robustness and data-driven uncertainty quantification.}
    \label{fig:overview}
\end{figure}

In this work, we propose a robust OSCTC framework that integrates model-based robustness with data-driven uncertainty quantification. As illustrated in Fig.~\ref{fig:overview}, an extended state observer (ESO) \cite{han2009ADRC} (orange block) is designed to estimate lumped disturbances directly in operational space, enabling compensation of both structured and unstructured disturbances in the robust OSCTC (gray block). Based on the disturbance estimate, a robust control barrier function (CBF) (green block) is constructed to guarantee safety, \emph{e.g.}, collision avoidance, under disturbance. To address the limitation in ESO-based CBF designs for a known disturbance-variation bound, we further develop a sliding-window conformal prediction mechanism \cite{shafer2008tutorial} (blue block) to estimate this bound online in a data-driven and less conservative manner.

The main contributions of this work are summarized as follows.
\begin{enumerate}
    \item We provide a mechanistic analysis of residual learning and identify its inherent limitations. In contrast, the proposed framework (i) requires only position measurements rather than full-state information, whose derivative terms are often noisy in real robotic systems; (ii) guarantees state convergence with theoretical support; and (iii) maintains a simple control structure with minimal parameter tuning.
    \item We incorporate a key advantage of data-driven methods through conformal prediction, which enables online estimation of the disturbance variation bound. This eliminates the need for a conservatively pre-specified fixed bound in robust CBF design and significantly reduces over-conservatism.
    \item The proposed framework is validated on a 7-DoF Franka Research 3 (FR3) redundant manipulator. Experimental results demonstrate millimeter-level tracking accuracy and safety under various disturbances, while maintaining a real-time control frequency of 1 kHz.

\end{enumerate}

The work most closely related to ours is \cite{morton2025safe}, which integrates tracking control and CBFs in task space. However, disturbances are not considered in their work. In contrast, our approach explicitly accounts for dynamic uncertainties, providing a principled foundation for safe task-space control of redundant manipulators operating in dynamic environments.

The rest of the paper is organized as follows: Section II introduces the preliminaries and problem formulation. Section III presents the proposed
framework. Simulation and hardware experiments are reported
in Section IV. Concluding remarks
are in Section V.

% \section{Related Work}
% In this section, we will briefly review the most important
% related work on control of manipulators.

% \subsection{Learning-based control}

% \subsection{Robust operational space control}

% \subsection{Safe control}

\section{Preliminary and Problem Formulation}

This section reviews the dynamics of redundant manipulators, operational space control, CBF, and introduces the problem formulation. %Two compensation strategies are introduced: observer-based estimation and residual learning-based modeling.

%====================================================
\subsection{System Dynamics}

A general robotic system with an Euler-Lagrange formulation in joint space is described as

\begin{equation}
M(q)\ddot{q} + C(q,\dot{q})\dot{q} + G(q) = \tau,
\label{eq:joint_dyn_nominal}
\end{equation}
where 
$q, \dot{q}, \ddot{q} \in \mathbb{R}^{n}$ denote the joint position, velocity, and acceleration vectors, respectively. 
$M(q) \in \mathbb{R}^{n\times n}$ is the symmetric positive-definite inertia matrix. 
$C(q,\dot{q})\dot{q} \in \mathbb{R}^{n}$ represents the Coriolis and centrifugal forces. 
$G(q) \in \mathbb{R}^{n}$ is the gravity vector. 
$\tau \in \mathbb{R}^{n}$ denotes the control torque input. In our experiments, we set $n=7$, corresponding to the 7-DoF FR3 redundant manipulator used in our study.

Let $x \in \mathbb{R}^{m}$ denote the task-space variable (\emph{e.g.}, end-effector position or pose). The task-space kinematics are given by
\begin{align}
\dot{x} &= J(q)\dot{q}, \\
\ddot{x} &= J(q)\ddot{q} + \dot{J}(q,\dot{q})\dot{q},
\end{align}
where $J(q) \in \mathbb{R}^{m \times n}$ is the Jacobian matrix, and 
$\dot{J}(q,\dot{q})$ denotes its time derivative.

Substituting the joint-space acceleration 
\[
\ddot{q} = M^{-1}(q)\big(\tau - C(q,\dot{q})\dot{q} - G(q)\big)
\]
into the task-space acceleration yields
\begin{equation}
\ddot{x} 
= J(q) M^{-1}(q)\big(\tau - C(q,\dot{q})\dot{q} - G(q)\big)
+ \dot{J}(q,\dot{q})\dot{q}.
\label{eq:xddot_nominal}
\end{equation}

%====================================================
\subsection{Operational Space Computed Torque Control}
According to the OSCTC proposed in \cite{Khatib1987OSC}, the control torque is designed as
\begin{equation}
\tau = J^\top(q) F + N^\top(q)\tau_0,
\label{eq:OSCTC}
\end{equation}
where  $N(q) = I - M^{-1}(q) J^\top(q) \Lambda(q) J(q)$ is the dynamically consistent null-space projector. Here, $F \in \mathbb{R}^m$ is the task-space control force, and $\tau_0 \in \mathbb{R}^n$ is a secondary joint-space torque, \emph{e.g}., a posture Proportional-Derivative (PD) controller. In our experiments, we set $m=3$ to focus on task-space position tracking using $F$, while the posture controller $\tau_0$ operates in the null space to maintain the initial natural orientation.

Substituting \eqref{eq:OSCTC} into the task-space dynamics \eqref{eq:xddot_nominal} yields

\begin{equation}
\ddot{x}
= \Lambda^{-1}(q) F
+ \mu(q,\dot{q}),
\label{eq:xddot_task}
\end{equation}
where  $\Lambda(q) = \left(J(q) M^{-1}(q) J^\top(q)\right)^{-1}$ is the operational-space inertia matrix, and the nonlinear term is defined as
\begin{equation}
\mu(q,\dot{q})
=
- J(q) M^{-1}(q)\big(C(q,\dot{q})\dot{q} + G(q)\big)
+ \dot{J}(q,\dot{q})\dot{q}.
\end{equation}

To achieve exponentially stable tracking of a desired trajectory $x_d(t)$, the commanded task-space acceleration is designed as
\begin{equation}
a_{\mathrm{cmd}}
\triangleq
\ddot{x}_d
+ K_d(\dot{x}_d - \dot{x})
+ K_p(x_d - x),
\label{eq:acmd}
\end{equation}
where $K_p, K_d \in \mathbb{R}^{m \times m}$ are symmetric positive-definite gain matrices.

Enforcing $\ddot{x} = a_{\mathrm{cmd}}$ and using \eqref{eq:xddot_task}, the required task-space force is obtained as

\begin{equation}
F
=
\Lambda(q)
\big(
a_{\mathrm{cmd}}
-
\mu(q,\dot{q})
\big).
\label{eq:F_solution}
\end{equation}

% Under perfect modeling, substituting the above torque into \eqref{eq:xddot_nominal} yields

% \begin{equation}
% \ddot{x} = a_{\text{cmd}},
% \end{equation}

% resulting in  tracking error dynamics.

%====================================================
\subsection{Control Barrier Functions}

The task-space dynamics \eqref{eq:xddot_task} can be rewritten as a nonlinear control-affine system:
\begin{equation}\label{eq:affine}
    \dot{x}=\psi(x)+g(x)u,
\end{equation}
where $x \in \mathcal{X} \subset \mathbb{R}^{2m}$ includes task-space position and velocity, $\psi : \mathbb{R}^{2m} \rightarrow \mathbb{R}^{2m}$ and $g : \mathbb{R}^{2m} \rightarrow \mathbb{R}^{2m \times n}$ are Lipschitz continuous, and $u \in \mathcal{U} \subset \mathbb{R}^n$ is a control input vector.

The safety of system \eqref{eq:affine} can be guaranteed using a safety set $\mathcal{C}$ if it is forward invariant in the state space $\mathcal{X}$, \emph{i.e.}, for system \eqref{eq:affine} if solutions for some $u \in \mathcal{U}$ starting at any initial safe condition $x(0)  \in \mathcal{C}$ satisfy $x(t)  \in \mathcal{C}$, $\forall t \ge 0$.
The safety set $\mathcal{C}$ is defined as a 0-superlevel set of a continuously differentiable function \(h(x): \mathbb{R}^{2m} \rightarrow \mathbb{R}\) as:

\begin{equation}
    \mathcal{C} = \{x \in \mathbb{R}^{2m}~|~h(x) \ge 0\}.
\end{equation}

The function \(h\) is used to synthesize a controller with safety guarantees via a CBF. For a system with an arbitrary relative degree, we need the following higher-order CBF.
\begin{definition}
    (\emph{Exponential CBF (ECBF)} \cite{nguyen2016exponential, xiao2019control}) Consider system \eqref{eq:affine} with relative degree \(r\) for an \(r\)-times continuously differentiable function \(h\), \emph{i.e.}, \(L_gL_{\psi}h(x)=\cdots=L_gL_{\psi}^{r-2}h(x)=0\) and \(L_gL_{\psi}^{r-1}h(x)\neq 0, \forall x\in \mathcal{C}\). \(h(x)\) is an ECBF if there exists a row vector \(K_a\in\mathbb{R}^r\) satisfying \(\forall x\in\mathcal{C}\)
    
    \begin{equation}
        %\begin{array}{r@{}l}
            \sup\limits_{u \in \mathcal{U}} \big( L_{\psi}^r h(x)  +L_g L_{\psi}^{r-1}h(x)u \big) \geq -K_a \eta_b(x), 
        %\end{array}
    \end{equation}
\end{definition}
where \(\eta_b(x)=[h(x), \dot{h}(x), \cdots, {h^{(r-1)}(x)}]^T\), \(K_a=[k_1, \cdots, k_r]\), and the values of \(k_1, \cdots, k_r\) satisfy specific properties given in \cite{nguyen2016exponential, xiao2019control}. 

To leverage a CBF to guarantee safety, the control problem is formulated as a quadratic program (QP) with a CBF as a hard constraint. The QP formulation is as follows \cite{Ames2017CBF}:
\begin{equation}
\label{eq:QP-CBF}
\begin{split}
    u^* (x) & = \operatorname*{arg\,min}_{u\in U} \|u-k(x)\|^2 \\
    \text{s.t.} % \\
    \quad L_{\psi}^r h(x) & +L_g L_{\psi}^{r-1}h(x)u \geq -K_a \eta_b(x),
\end{split}
\end{equation}
where \(k(x)\) is a nominal control law.

\subsection{Problem Formulation}

In practice, exact robot dynamics are unavailable. 
We consider an uncertain manipulator with nominal model 
$\bar{M}(q)$, $\bar{C}(q,\dot{q})$, and $\bar{G}(q)$:
\begin{equation}
\bar{M}(q)\ddot{q} 
+ \bar{C}(q,\dot{q})\dot{q} 
+ \bar{G}(q) 
= \tau + d,
\label{eq:joint_uncertain}
\end{equation}
where $d \in \mathbb{R}^n$ denotes lumped joint-space disturbances, including modeling errors, friction, payload variations, and external interactions.

Mapping to task space yields

\begin{equation}
\ddot{x}
=
J \bar{M}^{-1}\tau
- J \bar{M}^{-1}\big(
\bar{C}\dot{q}
+ \bar{G}
\big)
+ \dot{J}\dot{q}
+ f,
\label{eq:task_uncertain}
\end{equation}
where the task-space disturbance is defined as

\begin{equation}
f \triangleq J \bar{M}^{-1} d.
\end{equation}

The objective is to achieve exponentially stable tracking of a desired task-space trajectory $x_d$ while ensuring safety in the presence of the unknown disturbance $f$.

%====================================================
% \subsection{Tracking Error Dynamics Under Disturbance}

% Define the tracking error

% \begin{equation}
% e = x - x_d.
% \end{equation}

% Substituting the nominal CTC torque into \eqref{eq:task_disturbance} gives

% \begin{equation}
% \ddot{e}
% + K_d \dot{e}
% + K_p e
% =
% f.
% \label{eq:error_with_disturbance}
% \end{equation}

% Thus, in the presence of disturbance $f$, the closed-loop behaves as a second-order system driven by an unknown input.

% Therefore, disturbance compensation is necessary to restore desired error convergence.

% %====================================================
% \subsection{E. Unified Disturbance Compensation Framework}

% We augment the nominal control by introducing a disturbance estimate $\hat{f}$:

% \begin{equation}
% F = F_{\text{nominal}} - \Lambda \hat{f}.
% \end{equation}

% Substituting into the dynamics yields

% \begin{equation}
% \ddot{e}
% + K_d \dot{e}
% + K_p e
% =
% f - \hat{f}.
% \label{eq:error_compensated}
% \end{equation}

% Thus, the residual disturbance

% \begin{equation}
% \tilde{f} = f - \hat{f}
% \end{equation}

% directly determines tracking performance.

% We next introduce two strategies to construct $\hat{f}$.

%====================================================
\section{Proposed Framework}

\subsection{Operational Space ESO-Based Disturbance Estimation}

For the $i$-th dimension of the task space, the uncertain dynamics in \eqref{eq:task_uncertain} can be written as a second-order system:

\begin{equation}
\begin{cases}
\dot{x}_{1i} = x_{2i}, \\
\dot{x}_{2i} = g_i(x,\tau) + f_i,
\end{cases}
\label{eq:2nd-before-ESO}
\end{equation}
where $x_{1i}$ and $x_{2i}$ are the task-space position and velocity at $i$-th dimension, respectively. 
$g = J \bar{M}^{-1}\tau 
- J \bar{M}^{-1}\big( \bar{C}\dot{q} + \bar{G} \big)
+ \dot{J}\dot{q}$ denotes the nominal model component. 
The scalars $g_i$ and $f_i$ represent the $i$-th components of $g$ and $f$, respectively.

Define the augmented state $x_{3i} = f_i$. 
An ESO is designed as
\begin{equation}
\begin{cases}
\dot{\hat{x}}_{1i} = \hat{x}_{2i} - L_{1i}(\hat{x}_{1i} - x_{1i}), \\
\dot{\hat{x}}_{2i} = \hat{x}_{3i} + g_i - L_{2i}(\hat{x}_{1i} - x_{1i}), \\
\dot{\hat{x}}_{3i} = - L_{3i}(\hat{x}_{1i} - x_{1i}),
\end{cases}
\label{eq:ESO}
\end{equation}
where $L_{1i}, L_{2i}, L_{3i} > 0$ are observer gains. Let the estimation error be defined as
\[
\vartheta_i =
\begin{bmatrix}
x_{1i} - \hat{x}_{1i} \\
x_{2i} - \hat{x}_{2i} \\
x_{3i} - \hat{x}_{3i}
\end{bmatrix}.
\]
Then, combining \eqref{eq:2nd-before-ESO} and \eqref{eq:ESO}, the error dynamics is
\begin{equation}
\dot{\vartheta}_i
=
A_i \vartheta_i
+
E_i \dot{f}_i,
\label{eq:error_dynamic}
\end{equation}
where
\[
A_i =
\begin{bmatrix}
- L_{1i} & 1 & 0 \\
- L_{2i} & 0 & 1 \\
- L_{3i} & 0 & 0
\end{bmatrix},
\quad
E_i =
\begin{bmatrix}
0 \\
0 \\
1
\end{bmatrix}.
\]

The observer gains are selected such that $A_i$ is Hurwitz. 
In practice, the poles are placed at $-\omega_{o_i}$, where $\omega_{o_i} > 0$ denotes the observer bandwidth, serving as the only tuning parameter of the ESO \cite{gao2003scaling}.

\subsection{Residual Learning vs. Dynamic Disturbance Estimation}

A widely used paradigm in the learning-based control community models the unknown term $f_i$ as a residual correction to the nominal dynamics. 
Rearranging \eqref{eq:2nd-before-ESO} yields
\begin{equation}
f_i = \dot{x}_{2i} - g_i(x,\tau),
\label{eq:residual_algebraic}
\end{equation}
which requires access to the acceleration $\dot{x}_{2i} = \ddot{x}_i$. 
The resulting residual estimate is then used as a supervised learning target to train a neural network (NN),
\begin{equation}
\hat{f}_i = \phi_i(z),
\end{equation}
where $z$ collects measurable states such as 
$z = [x^\top,\dot{x}^\top,\tau^\top]^\top$.

\begin{rem}[Residual Learning vs. ESO]
Compared with residual learning, the proposed operational-space ESO (i) needs only position measurements, avoiding the noisy numerical differentiation that residual learning requires for velocity and acceleration; (ii) embeds disturbance estimation in a Hurwitz observer with explicit exponential error convergence, rather than relying on convergence of a learned model; and (iii) adds a single bandwidth parameter instead of network architecture and training design.
\end{rem}

\subsection{Robust OSCTC design}
After estimating the disturbance $\hat{f}$, the controller in \eqref{eq:F_solution} is reformulated in a robust form to compensate for its effect:
\begin{equation}
    F = \bar{\Lambda} a_{\text{cmd}} + \bar{\Lambda}(J\bar{M}^{-1}\bar{C} - \dot{J}\dot{q}) + {\bar{\Lambda} J \bar{M}^{-1}\bar{G}} - \bar{\Lambda} \hat{f}.
\end{equation}

We will show below that, even in the presence of disturbances, the null-space torque does not affect the task-space tracking guarantee.

\begin{lemma}[Null-space decoupling under uncertain inertia]
\label{lem:null_uncertain}
Let $\bar M=\bar M^\top\succ 0$ and assume $J$ has full row rank so that
$\bar\Lambda \triangleq (J\bar M^{-1}J^\top)^{-1}$ exists. Define the
$\bar M$-weighted generalized inverse and the associated projector
\begin{equation}
\tilde J^\# \triangleq \bar M^{-1}J^\top\bar\Lambda,\qquad
\bar N \triangleq I-\tilde J^\#J.
\end{equation}
Then the null-space torque $\tau_0=\bar N^\top \eta$ does not affect the task
space, \emph{i.e.},
\begin{equation}
J\bar M^{-1}\bar N^\top=0.
\end{equation}
\end{lemma}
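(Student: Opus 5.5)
The identity follows by direct matrix algebra, using only the symmetry of $\bar M^{-1}$ and $\bar\Lambda$. I will first write $\bar N^\top$ explicitly, then multiply on the left by $J\bar M^{-1}$, and finally collapse the product using $\bar\Lambda^{-1}=J\bar M^{-1}J^\top$.

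\textbf{Transpose.} Since $\bar M=\bar M^\top$, we have $(\bar M^{-1})^\top=\bar M^{-1}$. The matrix $J\bar M^{-1}J^\top$ is then symmetric, and because $J$ has full row rank it is positive definite, so it is invertible and $\bar\Lambda$ is symmetric. It follows that
\begin{equation*}
\bar N^\top = I - J^\top (\tilde J^\#)^\top = I - J^\top \bar\Lambda J \bar M^{-1}.
\end{equation*}

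\textbf{Left multiplication.} Multiplying by $J\bar M^{-1}$ gives
\begin{equation*}
J\bar M^{-1}\bar N^\top = J\bar M^{-1} - \big(J\bar M^{-1}J^\top\big)\bar\Lambda J\bar M^{-1} = J\bar M^{-1} - \bar\Lambda^{-1}\bar\Lambda J\bar M^{-1} = 0.
\end{equation*}
Hence, for any $\eta$, the torque $\tau_0=\bar N^\top\eta$ enters the nominal task-space acceleration \eqref{eq:task_uncertain} through the term $J\bar M^{-1}\tau_0=0$. It therefore does not affect the task space.

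There is no real obstacle here. The only points needing care are the symmetry of $\bar\Lambda$ used in the transpose step and the invertibility of $J\bar M^{-1}J^\top$, which requires both the full-row-rank assumption and $\bar M\succ0$. The decoupling also holds only with respect to the nominal inertia $\bar M$. Any mismatch with the true $M$ has already been absorbed into $d$ and hence into $f$, which is why the lemma is stated with $\bar M$.
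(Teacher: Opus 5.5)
Your proof is correct and follows essentially the same route as the paper: you use the symmetry of $\bar M$ and $\bar\Lambda$ to get $(\tilde J^\#)^\top=\bar\Lambda J\bar M^{-1}$, then left-multiply by $J\bar M^{-1}$ and collapse the product via $(J\bar M^{-1}J^\top)\bar\Lambda=I$. Your extra justification that $J\bar M^{-1}J^\top\succ 0$ (so $\bar\Lambda$ exists and is symmetric) simply makes explicit what the paper takes as given.
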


\begin{proof}
Since $\bar M$ and $\bar\Lambda$ are symmetric, $(\tilde J^\#)^\top=\bar\Lambda
J\bar M^{-1}$. Hence
\[
J\bar M^{-1}\bar N^\top
=J\bar M^{-1}\!\left(I-J^\top(\tilde J^\#)^\top\right)
=\left(I-(J\bar M^{-1}J^\top)\bar\Lambda\right)J\bar M^{-1}=0,
\]
where $(J\bar M^{-1}J^\top)\bar\Lambda=I$ by definition.
\end{proof}

\subsection{Robust High-Order CBF for Safe Control}

\begin{assumption}\label{asm1}
There exists a positive known constant \(l_{fi}\) such that for any \(x\in \mathcal{X}\), \(u\in \mathcal{U}\), and \(t\geq0\), the following inequality holds:
    \begin{equation}\label{eq_6}
        \left|\dfrac{\partial f_i(t,x,u)}{\partial t}\right|\leq l_{fi},
    \end{equation}
\end{assumption}
where $i$ indicates the $i$-th  dimension of the dynamic system, see \eqref{eq:2nd-before-ESO}. Assumption \ref{asm1} implies that \(f_i\) is locally Lipschitz continuous with respect to time, and $\dot{f}_i$ is bounded. 

The system  \eqref{eq:task_uncertain} with disturbance is formulated into a control affine form:
\begin{equation}\label{eq:RCBF_dynamics}
    \dot{x}=\psi(x)+g(x)u+\bar{g}f,
\end{equation}
where \(f=[f_1, \cdots, f_m]^T \in \mathcal{F} \subset \mathbb{R}^m\) is a total disturbance vector in each input channel, {\(\bar{g}=[\mathbf{0}_{m\times m}, \mathbf{I}_{m\times m}]^T\), \(\mathbf{0}_{m\times m}\) and \(\mathbf{I}_{m\times m}\)} are \(m\times m\) zero and identity matrices, respectively. \(f\) is a matched disturbance vector. To define a safety set $\mathcal{C}_s$ using a high-order CBF approach \cite{xiao2019control}, we consider an \(r\)-times continuously differentiable function \(h(x)\), where the relative degrees of \(h(x)\) with respect to both \(u\) and \(f\) are \(r\), given that \(f\) is matched. A sequence of functions is defined recursively: $h_0(x)=h(x)$ and $h_i(x)=\dot{h}_{i-1}(x)+\gamma_i h_{i-1}(x)$ for $i=1,\dots,r$. The safe set is $\mathcal{C}_s = \cap_{i=0}^r \{x \mid h_i(x) \geq 0\}$.

Based on \cite{chen2025model}, the disturbance estimation error is bounded in the discrete domain:
% The error $\tilde{f} = f - \hat{f}$ is bounded by a vector $\Gamma \in \mathbb{R}^n_{+}$:
\begin{equation}\label{eq:RCBF_ErrorBound}
        |f_i(k)-\hat{f}_i(k)|\leq\Gamma_i(\omega_{o_i}, l_{fi}, T_s),
\end{equation}
where the detailed expression of $\Gamma$ can be found in \cite{chen2025model}. For simplicity, this error bound depends only on the sampling time $T_s$, the discrete-time observer gain $\omega_{o_i}$, and the disturbance-variation bound $l_{fi}$.
\begin{theorem}\label{Th_RCBFwithFhat}
    Given the system \eqref{eq:RCBF_dynamics} and the ESO in \eqref{eq:ESO} under Assumption \ref{asm1}, any controller \(u(x)\in K_{\text{rcbf}}\) renders the set \(\mathcal{C}_s\) forward invariant for system \eqref{eq:RCBF_dynamics}, where
    \begin{equation}\label{eq:RCBFUSetFbound}
        \begin{array}{r@{}l}
            K_{\text{rcbf}}(t,x,f,\hat{f}) \triangleq & \{u\in \mathcal{U}: L_{\psi}^r h(x) + L_g L_{\psi}^{r-1} h(x)u \\
             & + L_{\bar{g}} L_{\psi}^{r-1} h(x)\hat{f} - |L_{\bar{g}} L_{\psi}^{r-1} h(x)|\Gamma(\omega_o, l_f, T_s) \\
             & +\sum\limits_{j=0}^{r-1}k_j h^{(j)}(x) \geq 0\}, 
        \end{array}
    \end{equation}
    where \(\Gamma(\omega_o, l_f, T_s)=[\Gamma_1(\omega_{o_1}, l_{f1}, T_s), \cdots, \Gamma_m(\omega_{o_m}, l_{fm}, T_s)]^T\). 
\end{theorem}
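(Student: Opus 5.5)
The argument reduces robust safety to the nominal high-order CBF argument by showing that any $u\in K_{\text{rcbf}}$ also satisfies the \emph{true} high-order CBF inequality, where the true disturbance $f$ replaces $\hat f$. Along trajectories of \eqref{eq:RCBF_dynamics}, $h$ has relative degree $r$ with respect to both $u$ and $f$, because $f$ is matched through $\bar g$. Hence $h^{(j)}=L_\psi^j h$ for $j<r$, and
\[
h^{(r)}(x)=L_\psi^r h(x)+L_gL_\psi^{r-1}h(x)\,u+L_{\bar g}L_\psi^{r-1}h(x)\,f .
\]
Writing $f=\hat f+(f-\hat f)$, the last term splits into $L_{\bar g}L_\psi^{r-1}h\,\hat f+L_{\bar g}L_\psi^{r-1}h\,(f-\hat f)$.

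The disturbance-error term is bounded componentwise. $L_{\bar g}L_\psi^{r-1}h$ is a row vector in $\mathbb{R}^{1\times m}$. By Assumption~\ref{asm1}, $\dot f_i$ is bounded by $l_{fi}$, so the ESO error dynamics \eqref{eq:error_dynamic} with Hurwitz $A_i$ yields the bound \eqref{eq:RCBF_ErrorBound}, $|f_i-\hat f_i|\le\Gamma_i$. Then
\[
L_{\bar g}L_\psi^{r-1}h\,(f-\hat f)=\sum_{i=1}^m [L_{\bar g}L_\psi^{r-1}h]_i(f_i-\hat f_i)\ge -\sum_{i=1}^m\big|[L_{\bar g}L_\psi^{r-1}h]_i\big|\,\Gamma_i=-|L_{\bar g}L_\psi^{r-1}h|\,\Gamma ,
\]
with $|\cdot|$ taken elementwise. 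Combining this with membership $u\in K_{\text{rcbf}}$ gives
\[
h^{(r)}(x)+\sum_{j=0}^{r-1}k_jh^{(j)}(x)\ge 0
\]
along the true closed-loop trajectory.

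The last step invokes the standard high-order/exponential CBF result \cite{nguyen2016exponential,xiao2019control}. The gains $k_j$ are chosen as the coefficients of $\prod_{i=1}^r(s+\gamma_i)$, so the left-hand side equals $h_r$ from the recursive construction $h_i=\dot h_{i-1}+\gamma_ih_{i-1}$. Then $\dot h_{r-1}+\gamma_rh_{r-1}\ge0$ with $h_{r-1}(0)\ge0$ gives $h_{r-1}(t)\ge0$ by the comparison lemma. Inducting backward, each $h_{i-1}$ satisfies $\dot h_{i-1}\ge-\gamma_ih_{i-1}$, so $h_{i-1}(t)\ge h_{i-1}(0)e^{-\gamma_it}\ge0$. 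Therefore every $h_i$ stays nonnegative, and $\mathcal C_s$ is forward invariant.

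The main obstacle is that the error bound \eqref{eq:RCBF_ErrorBound} holds only at the sampling instants $k$, whereas forward invariance is a continuous-time statement. I would first treat the control and estimate as piecewise constant (zero-order hold) and argue that $\Gamma_i$, as constructed in \cite{chen2025model}, already accounts for the intersample growth $l_{fi}T_s$. With that, $|f_i(t)-\hat f_i(t)|\le\Gamma_i$ holds for all $t$, and the continuous-time comparison argument above goes through. A secondary check is existence and uniqueness of solutions, which requires $K_{\text{rcbf}}$ to be nonempty and the selected $u$ to be locally Lipschitz (or at least measurable, for Carathéodory solutions). I would state this as an implicit standing assumption, as is customary for QP-based CBF controllers.
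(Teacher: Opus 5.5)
The paper does not prove this theorem: the proof is omitted, and the bound \eqref{eq:RCBF_ErrorBound} is simply imported from \cite{chen2025model}. So your argument has to stand on its own. Its core is sound and is the standard observer-based robust-CBF reduction. It consists of three steps:
\begin{itemize}
\item the split $f=\hat f+(f-\hat f)$;
\item the elementwise estimate $L_{\bar g}L_\psi^{r-1}h\,(f-\hat f)\ge-|L_{\bar g}L_\psi^{r-1}h|\,\Gamma$, which turns membership in $K_{\text{rcbf}}$ into $h_r\ge0$ along the true trajectory;
\item the backward comparison argument down to $h_0$.
\end{itemize}
You are also right that the statement silently requires the $k_j$ to be the coefficients of $\prod_{i=1}^r(s+\gamma_i)$. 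Without that link, $K_{\text{rcbf}}$ says nothing about $\mathcal C_s$.

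What is not yet justified is the input your comparison argument needs, namely $|f_i(t)-\hat f_i(t)|\le\Gamma_i$ for all $t\ge0$.

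First, a Hurwitz $A_i$ and $|\dot f_i|\le l_{fi}$ do not by themselves give a bound depending only on $(\omega_{o_i},l_{fi},T_s)$. Variation of constants on \eqref{eq:error_dynamic} gives
\[
|f_i(t)-\hat f_i(t)|\le\bigl|E_i^\top e^{A_it}\vartheta_i(0)\bigr|+l_{fi}\int_0^\infty\bigl|E_i^\top e^{A_is}E_i\bigr|\,ds .
\]
Since $\hat x_{3i}(0)$ cannot be initialized at the unknown $f_i(0)$, the transient term must be handled explicitly. You can fold a known bound on $\|\vartheta_i(0)\|$ into a time-varying $\Gamma_i(t)$, or assume the observer has converged before the safety filter is activated. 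Otherwise invariance can fail during the observer transient.

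Second, your zero-order-hold remedy does not close the argument. Suppose $u(t_k)\in K_{\text{rcbf}}$ is evaluated at $x(t_k)$ and held on $[t_k,t_{k+1})$. Then even a bound on $|f-\hat f|$ valid for all $t$ only certifies $h_r(t_k)\ge0$, because $L_\psi^rh$, $L_gL_\psi^{r-1}h$, $L_{\bar g}L_\psi^{r-1}h$ and the $h^{(j)}$ all drift between samples. A sampled-data guarantee needs an extra intersample margin built from Lipschitz constants and $T_s$, which $\Gamma$ does not supply.

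The cleaner route is to stay in continuous time, where both the theorem and the ESO \eqref{eq:ESO} are posed. With observer poles at $-\omega_{o_i}$, the transfer function from $\dot f_i$ to $f_i-\hat f_i$ is $(s^2+3\omega_{o_i}s+3\omega_{o_i}^2)/(s+\omega_{o_i})^3$. Its impulse response $e^{-\omega_{o_i}t}(1+\omega_{o_i}t+\omega_{o_i}^2t^2/2)$ is positive with integral $3/\omega_{o_i}$. Hence $|f_i(t)-\hat f_i(t)|\le 3l_{fi}/\omega_{o_i}$ plus the decaying transient. With the transient handled as above, your argument goes through verbatim with this $\Gamma_i$, or with any $\Gamma_i$ dominating it.
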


% Due to space limitations, the detailed proof is omitted. 
The main challenge is that the disturbance-variation bound, denoted by $l_{fi}$ in Assumption~\ref{asm1}, is typically unknown in practice. 
In the following section, we employ conformal prediction as a data-driven approach to estimate this bound.

We consider wall-like safety constraints defined along a Cartesian axis.
Define the scalar projection
\begin{equation}
\zeta := e^\top x(q),
\qquad
\dot{\zeta} = e^\top \dot{x} = e^\top J(q)\dot{q},
\label{eq:zeta_def}
\end{equation}
where $e \in \mathbb{R}^3$ is a unit selection vector, \emph{i.e.},
\begin{equation}
\label{eq:e_definition}
e^\top \in \{[1,0,0],\ [0,1,0],\ [0,0,1]\},
\end{equation}
corresponding to the $x$-, $y$-, and $z$-axis directions, respectively.

To enforce a scalar constraint of the form
\begin{equation}
\label{eq:scalar_constraint_definition}
\sigma (\zeta - \zeta_b) \le 0,
\qquad
\sigma \in \{-1,1\},
\end{equation}
where $\zeta_b$ denotes the boundary value,
$\sigma=-1$ corresponds to the upper bound $\zeta \le \zeta_b$,
and $\sigma=+1$ corresponds to the lower bound $\zeta \ge \zeta_b$. Define the barrier function
\begin{equation}
h(\zeta) = -\sigma (\zeta - \zeta_b).
\end{equation}

Since the relative degree of system \eqref{eq:task_uncertain} is two, the ECBF condition is
\begin{equation}
\ddot{h} + k_1 \dot{h} + k_0 h \ge 0,
\qquad k_0,k_1>0.
\end{equation}

Our final QP-CBF is:

\begin{equation}
\label{eq:QP-CBF}
\begin{aligned}
\tau^* 
&= \operatorname*{arg\,min}_{\tau \in \mathcal{T}}
    \;\|\tau - k(x)\|^2 \\
\text{s.t.}\quad 
& a^\top \tau \le \beta .
\end{aligned}
\end{equation}
where \(k(x)\) is a nominal control law, $a^\top = (\sigma e)^\top J \bar{M}^{-1}$, $b = (\sigma\,e)^{\top}\big(\hat{f}+\Gamma+\dot{J}\dot{q} - J \bar{M}^{-1}(\bar{C}+\bar{G})\big)$, $\beta = -k_1 \dot{h} - k_0 h - b$.

\subsection{Probabilistic Robust CBF via Sliding-Window Conformal Disturbance Bounds}

% \subsection{Role of Disturbance Derivative in ESO Error Dynamics}

% Consider the task-space dynamics

% \begin{equation}
% \dot{x}_1 = x_2, \quad
% \dot{x}_2 = g(x,\tau) + f,
% \end{equation}

% where $f$ represents the lumped disturbance.

% Define the disturbance estimation error:

% \begin{equation}
% \tilde{f} = f - \hat{f}.
% \end{equation}

% For the linear ESO structure, the estimation error dynamics satisfy

% \begin{equation}
% \dot{\tilde{f}} = -L \tilde{f} + \dot{f}.
% \end{equation}

% Solving the linear differential equation yields

% \begin{equation}
% \tilde{f}(t)
% =
% \int_0^t e^{-L(t-s)} \dot{f}(s) ds.
% \end{equation}

% Hence,

% \begin{equation}
% |\tilde{f}(t)|
% \le
% \int_0^t e^{-L(t-s)} |\dot{f}(s)| ds.
% \end{equation}

% Therefore, deterministic robustness requires

% \begin{equation}
% |\dot{f}(t)| \le l_f,
% \end{equation}

% which implies

% \begin{equation}
% |\tilde{f}(t)| \le \gamma(l_f),
% \end{equation}

% where $\gamma(l_f)$ depends on observer bandwidth.

% This derivation reveals that bounded disturbance derivative is a structural requirement for deterministic ESO-based robust CBF formulations.

%\subsection{Sliding-Window Conformal Disturbance Derivative Bound}

To relax the deterministic assumption on the disturbance-variation bound, we estimate a distribution-free bound on the disturbance derivative using sliding-window conformal prediction online. The disturbance of the $i$-th dimension of the system \eqref{eq:2nd-before-ESO} is computed from the algebraic residual
\begin{equation}
f_i^{\mathrm{res}}(k) = \ddot{x}_i(k) - g_i(x(k),\tau(k)),
\end{equation}
and its discrete-time derivative is approximated by
\begin{equation}
r(k) = \dot{f}_i^{\mathrm{res}}(k)
= \frac{f_i^{\mathrm{res}}(k) - f_{i}^{\mathrm{res}}(k-1)}{T_s}.
\end{equation}

At time step $k$, consider a sliding window of size $N$:
\begin{equation}
\mathcal{R}(k)
=
\{ |r(k-N+1)|, \dots, |r(k)| \}.
\end{equation}

The empirical $(1-\alpha)$ quantile is defined as
\begin{equation}
l_i^\alpha(k)
=
\text{Quantile}_{1-\alpha}(\mathcal{R}(k)).
\end{equation}

Under the exchangeability assumption within the window \cite{papadopoulos2002inductive}, we obtain
\begin{equation}
\mathbb{P}
\big(
|\dot f_i^{\mathrm{res}}(k)|
\le
l_i^\alpha(k)
\big)
\ge
1-\alpha.
\end{equation}

\begin{rem}
In this work, we estimate the disturbance-variation bound directly from the residual rather than from the ESO-estimated disturbance. Although the residual signal is more noisy, it more faithfully reflects the variation of the true disturbance. In contrast, the disturbance estimated by the ESO is significantly smoothed due to the observer bandwidth, which attenuates high-frequency variations and may lead to an underestimation of the disturbance variation.
\end{rem}

%\subsection{Connecting the conformal bound to the ESO error bound}

As shown in~\eqref{eq:RCBF_ErrorBound}, the ESO estimation error is bounded by 
$\Gamma_i(\omega_{o_i}, l_{fi}, T_s)$, 
where the key quantity to be identified is $l_{fi}$, denoting an (unknown) upper bound on the disturbance variation. We set the disturbance-variation bound used in $\Gamma_i(\cdot)$ as
\begin{equation}\label{eq:lf_conformal}
l_{fi}(k) \triangleq l_i^\alpha(k),
\end{equation}
which yields an \emph{online, probabilistic} ESO error bound
\begin{equation}\label{eq:Gamma_online}
|f_i(k)-\hat{f}_i(k)|
\le
\Gamma_i\!\big(\omega_{o_i},\, l_{fi}(k),\, T_s\big)
=
\Gamma_i\!\big(\omega_{o_i},\, l_i^\alpha(k),\, T_s\big).
\end{equation}
The resulting bound inherits the coverage level $1-\alpha$ from the conformal prediction. After the error bound is determined, the robust CBF \eqref{eq:RCBFUSetFbound}, together with the estimated disturbance, is enforced online to guarantee safety.

\begin{rem}[Deterministic vs. Probabilistic Safety Guarantee]

If the disturbance variation bound $l_f$ is known a priori and satisfies
$|\dot f(k)| \le l_f$ for all $k$, 
then the ESO error bound in~\eqref{eq:RCBF_ErrorBound} holds deterministically.
Consequently, the resulting CBF condition \eqref{eq:RCBFUSetFbound} ensures forward invariance of the safe set $\{x : h(x) > 0\}$ with absolute guarantee.

In contrast, when $l_f$ is unknown and replaced by the conformal estimate
$l^\alpha(k)$, the ESO error bound becomes probabilistic.
Accordingly, the forward invariance of the safe set is guaranteed with probability at least $1-\alpha$,
under the exchangeability assumption of the sliding window.
\end{rem}

\section{Experiments and Results}

This section evaluates the proposed framework and the comparison with residual learning approach in both simulation and hardware experiments. 
% The objectives are:

% \begin{enumerate}
%     \item Validate disturbance estimation accuracy,
%     \item Quantify tracking performance under model mismatch,
%     \item Demonstrate safety guarantees via Control Barrier Functions (CBFs),
%     \item Analyze the contribution of each module through ablation studies.
% \end{enumerate}

%%%%%%%%%%%%%%%%%%%%%%%%%%%%%%%%%%%%%%%%%%%%%%%%%%%%%%%%%%%%%%%%%%%%%%%%%%
\subsection{Experimental Platform and Implementation Details}

All experiments are performed on a 7-DoF FR3 manipulator. Both simulation and hardware implementations operate at 1~kHz. 
The control framework is implemented in C++ within the Franka ROS stack, and the CBF-QP is solved in real time using qpOASES. The residual neural network is trained offline and deployed online via ONNX Runtime for efficient real-time inference.

%To mitigate undesirable transient excitation and observer initialization effects, a smooth minimum-jerk ramp-in phase is applied during the first few seconds of each experiment.During this period, the trajectory amplitude is gradually increased from zero, allowing the extended state observer (ESO) and real-time uncertainty estimator to converge before full disturbance compensation and safety filtering are activated.
% The architecture consists of:

% \begin{itemize}
%     \item Operational Space Controller (OSC),
%     \item Extended State Observer (ESO),
%     \item Residual neural network disturbance model,
%     \item CBF-based quadratic program (QP) safety filter.
% \end{itemize}

%%%%%%%%%%%%%%%%%%%%%%%%%%%%%%%%%%%%%%%%%%%%%%%%%%%%%%%%%%%%%%%%%%%%%%%%%%
\subsubsection{Disturbance Injection Model}

To evaluate robustness, we inject nonlinear state-dependent disturbances in task space:
\begin{equation}
\label{eq:added_disturbance}
d_i(x,\dot{x}) = 
a_{p,i} \tanh\!\left(\frac{x_i - x_{0,i}}{w}\right)
+ a_{v,i} \dot{x}_i |\dot{x}_i|.
\end{equation}

The first term introduces bounded nonlinear position-dependent disturbance, 
while the second term models quadratic velocity-dependent damping. This disturbance is smooth and Lipschitz continuous, satisfying Assumption \ref{asm1}.

%%%%%%%%%%%%%%%%%%%%%%%%%%%%%%%%%%%%%%%%%%%%%%%%%%%%%%%%%%%%%%%%%%%%%%%%%%

\subsubsection{Safety setup}
In our experiments, we choose $e^\top = [0,1,0]$ in \eqref{eq:e_definition}, corresponding to a constraint along the $y$-axis. We set $\sigma = -1$ and $\zeta_b = 0.3\,\text{m}$ in \eqref{eq:scalar_constraint_definition}, so that the safe set is defined as $\{y \le 0.3\,\text{m}\}$.

% To enforce the upper bound $y \le y_{\max}$, define the barrier function \[ h(y) = y_{\max} - y = sign \,(y - y_{\max}), \quad sign = -1. \] Thus, \[ \dot{h} = sign \,\dot{y}, \qquad \ddot{h} = sign \,\ddot{y}. \] Since the relative degree is 2, the high-order exponential CBF (ECBF) condition is \[ \boxed{\ddot{h} + k_1 \dot{h} + k_0 h \ge 0}, \quad k_0, k_1 > 0. \] Substitute $\ddot{h} = s\,\ddot{y}$ and $\dot{h} = s\,\dot{y}$: \[ s\,e_y^{\top}\big( J \bar{M}^{-1} \tau + f - J \bar{M}^{-1} \bar{C}^{-1} - J \bar{M}^{-1} \bar{G}^{-1} + \dot{J} \dot{q}\big) + k_1 (s\,\dot{y}) + k_0 (y_{\max} - y) \ge 0. \] Collect the $\tau$ terms on the left-hand side: \[ (s\,e_y)^{\top}J \bar{M}^{-1}\tau \ge -k_1 (s\,\dot{y}) - k_0 (y_{\max} - y) - (s\,e_y)^{\top}(f - J \bar{M}^{-1} \bar{C}^{-1} - J \bar{M}^{-1} \bar{G}^{-1} + \dot{J} \dot{q}). \] 

% \[ { \begin{aligned} a^{\top} &= (\sigma\,e)^{\top}J \bar{M}^{-1},\\ b &= (\sigma\,e)^{\top}\big(\hat{f}+\Gamma+\dot{J}\dot{q} - J \bar{M}^{-1}(\bar{C}+\bar{G})\big),\\ \beta &= -k_1 \dot{h} - k_0 h - b, \end{aligned}} \] so the constraint becomes \[ {a^{\top}\tau \ge \beta.} \]

\begin{figure}[htbp]
    \centering
    \includegraphics[width=0.9\linewidth]{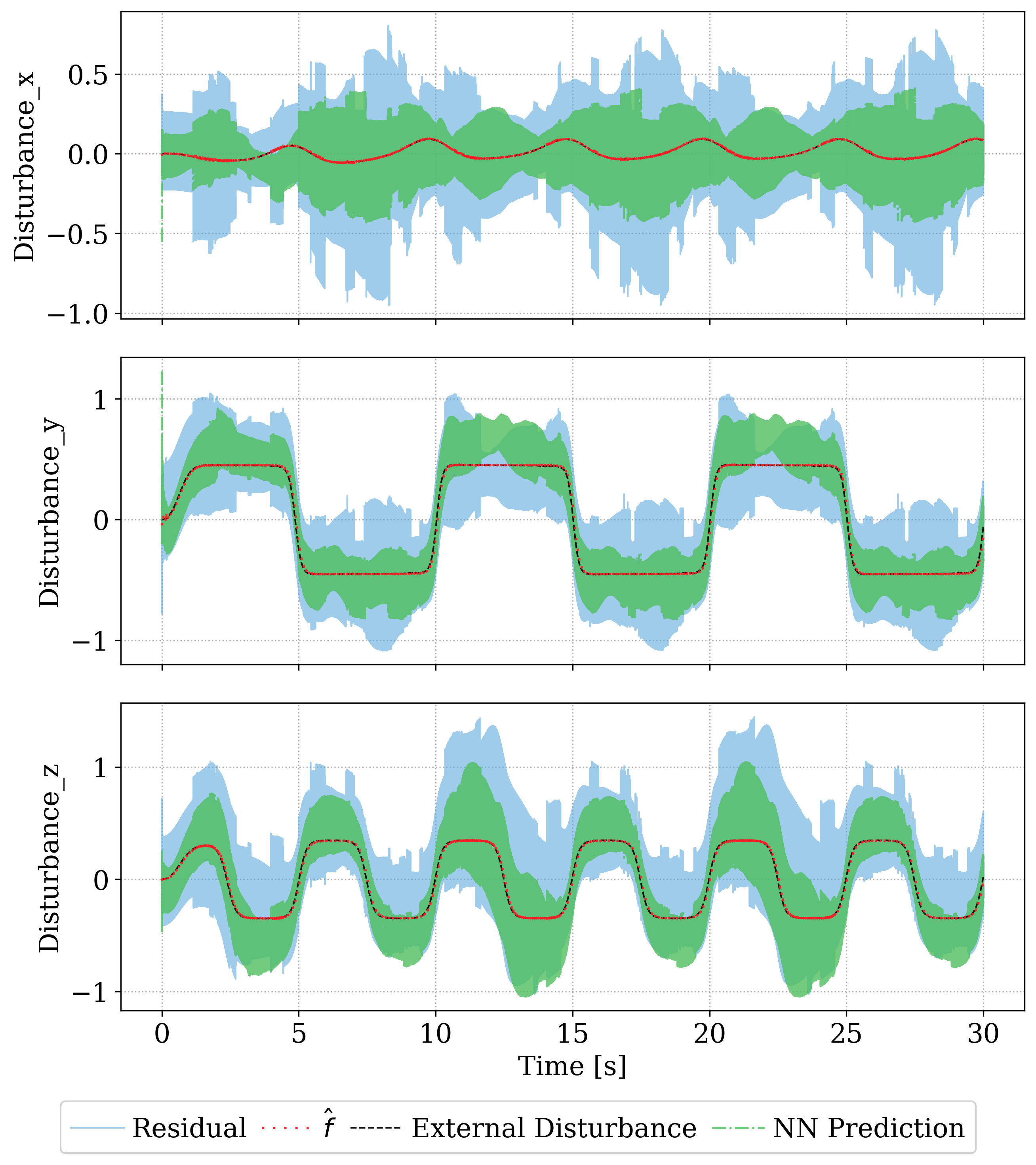}
    \caption{Disturbance estimation in simulation environment. 
    % Ground truth residual (black), ESO estimate (blue), and neural-network prediction (red).
    }
    \label{fig:sim_residual}
\end{figure}

\subsection{Simulation: Disturbance Estimation}

We first validate disturbance estimation in simulation. In FR3's simulation, the model errors in \( \bar{M}(q), \bar{C}(q,\dot{q}), \bar{G}(q) \) are negligible, and the dominant disturbance arises from the injected external signal in~\eqref{eq:added_disturbance}, allowing direct comparison between the ground-truth external disturbance and different estimation methods. We compare the computed residual, the ESO estimate \( \hat{f} \), and the NN-based prediction.

As shown in Fig.~\ref{fig:sim_residual}, the injected external disturbance (dashed black) exhibits the same overall trend as the computed residual (blue). The residual is noticeably noisier because it captures both internal (model mismatch) and external disturbances and is sensitive to state measurement noise. The NN prediction also follows the residual trend, validating its fitting accuracy. In contrast, the proposed ESO closely tracks the true external disturbance with reduced variation, as the observer bandwidth attenuates high-frequency components.

% the ESO estimate closely follows 
% the dominant trend of the true residual disturbance while exhibiting 
% clear smoothing characteristics. Compared to algebraic residual 
% reconstruction, the ESO avoids explicit numerical differentiation, 
% thus mitigating high-frequency noise amplification.

% Moreover, the ESO estimate aligns well with the injected external disturbance, 
% demonstrating effective disturbance tracking with limited phase lag. 
% This validates the noise-robust structure of dynamic disturbance estimation.

%%%%%%%%%%%%%%%%%%%%%%%%%%%%%%%%%%%%%%%%%%%%%%%%%%%%%%%%%%%%%%%%%%%%%%%%%%
\subsection{Hardware: Tracking Under Sudden External Disturbance}

% In the first hardware experiment, the end effector is commanded to track a lemniscate reference trajectory. A smooth ramp-in phase is applied during the first 5 seconds to mitigate the initial tracking transient, as the robot starts with a nonzero offset from the reference trajectory. An external disturbance is introduced at $t=15$~s by attaching a partially filled water bottle to the end-effector. This setup is nontrivial because the suddenly added payload changes the system inertia, and the \emph{sloshing} liquid generates time-varying disturbances.

% We compare the proposed method with conventional OSCTC without disturbance compensation \cite{Khatib1987OSC} and a residual learning-based method widely used in the learning-based control community. Fig.~\ref{fig:hardware_tracking_disturbance} shows the task-space trajectory and the tracking error. Before the disturbance is injected, all controllers exhibit comparable tracking performance. After $t=15$~s, conventional OSCTC (solid blue) shows noticeable deviation and increased tracking error. The residual learning method (dashed green) has slightly better performance than conventional OSCTC, but is much worse than ours (dashed red). Our controller shows superior robustness with stable tracking errors before and after the disturbance injection.

In the first hardware experiment, the end effector tracks a lemniscate trajectory, with a 5-second ramp-in to mitigate the initial transient from its nonzero starting offset. At $t=15$~s we attach a partially filled water bottle, whose added mass changes the inertia and whose \emph{sloshing} liquid generates time-varying disturbances.

\begin{figure}[h!]
    \centering
    \includegraphics[width=1\linewidth]{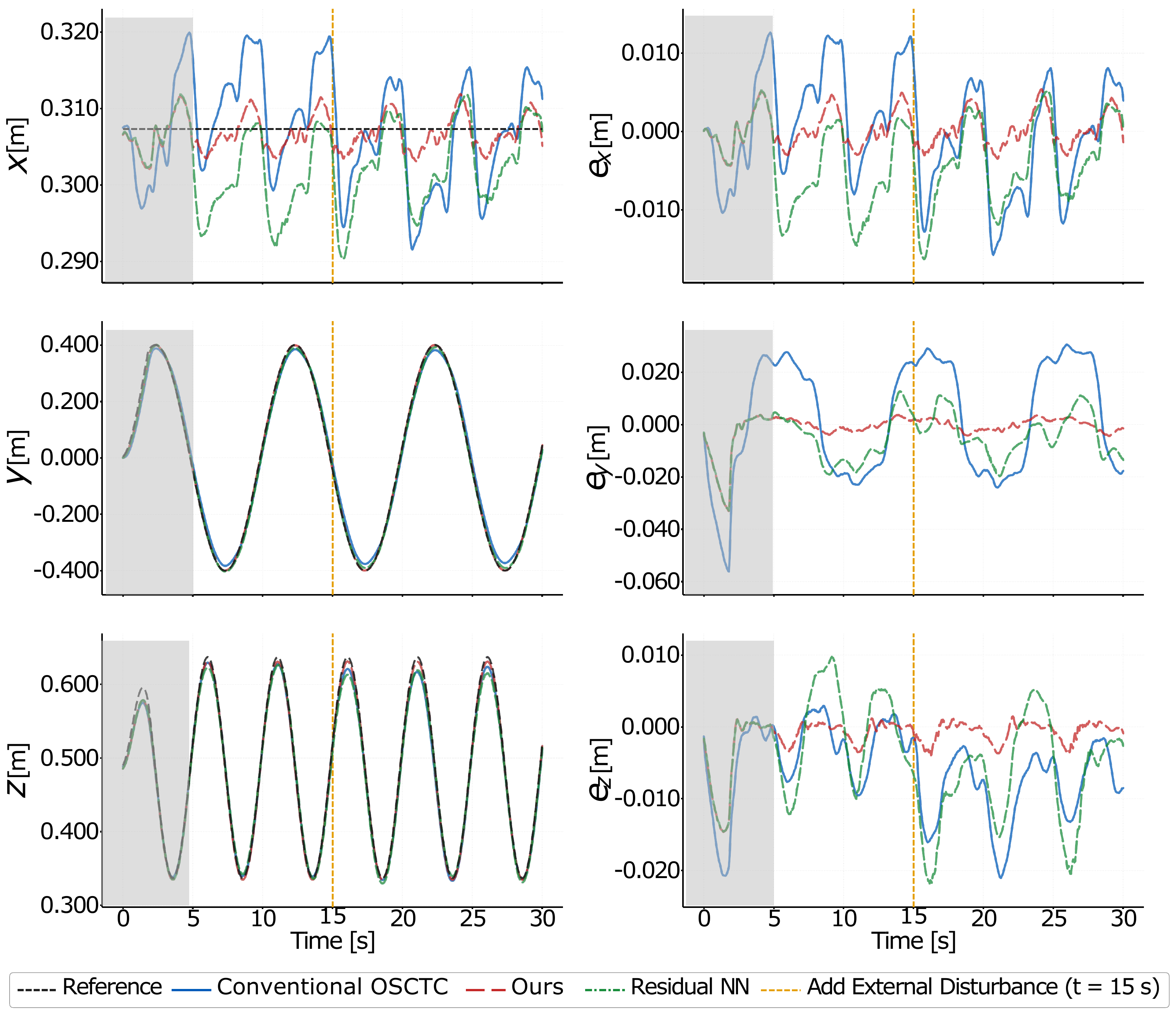}
    \caption{Hardware trajectory tracking under external disturbance 
    (left: trajectory tracking; right: tracking error; sloshing payload attached at $t=15$~s, 
    see the vertical dashed yellow lines). 
    The ramp-in phase in the first 5 seconds is colored in grey. 
    Overall mean-square error (MSE, unit [m]): Conventional OSCTC: $2.033\mathrm{e}{-04}$, Ours: $1.712\mathrm{e}{-05}$, Residual NN: $7.531\mathrm{e}{-05}$.}
    \label{fig:hardware_tracking_disturbance}
\end{figure}

We compare against conventional OSCTC without compensation \cite{Khatib1987OSC} and a widely used residual learning method (Fig.~\ref{fig:hardware_tracking_disturbance}). All controllers track comparably before injection; after $t=15$~s, conventional OSCTC (solid blue) deviates noticeably, residual learning (dashed green) does slightly better, and ours (dashed red) keeps a stable tracking error throughout.

\begin{figure}[h!]
    \centering
    \includegraphics[width=0.9\linewidth]{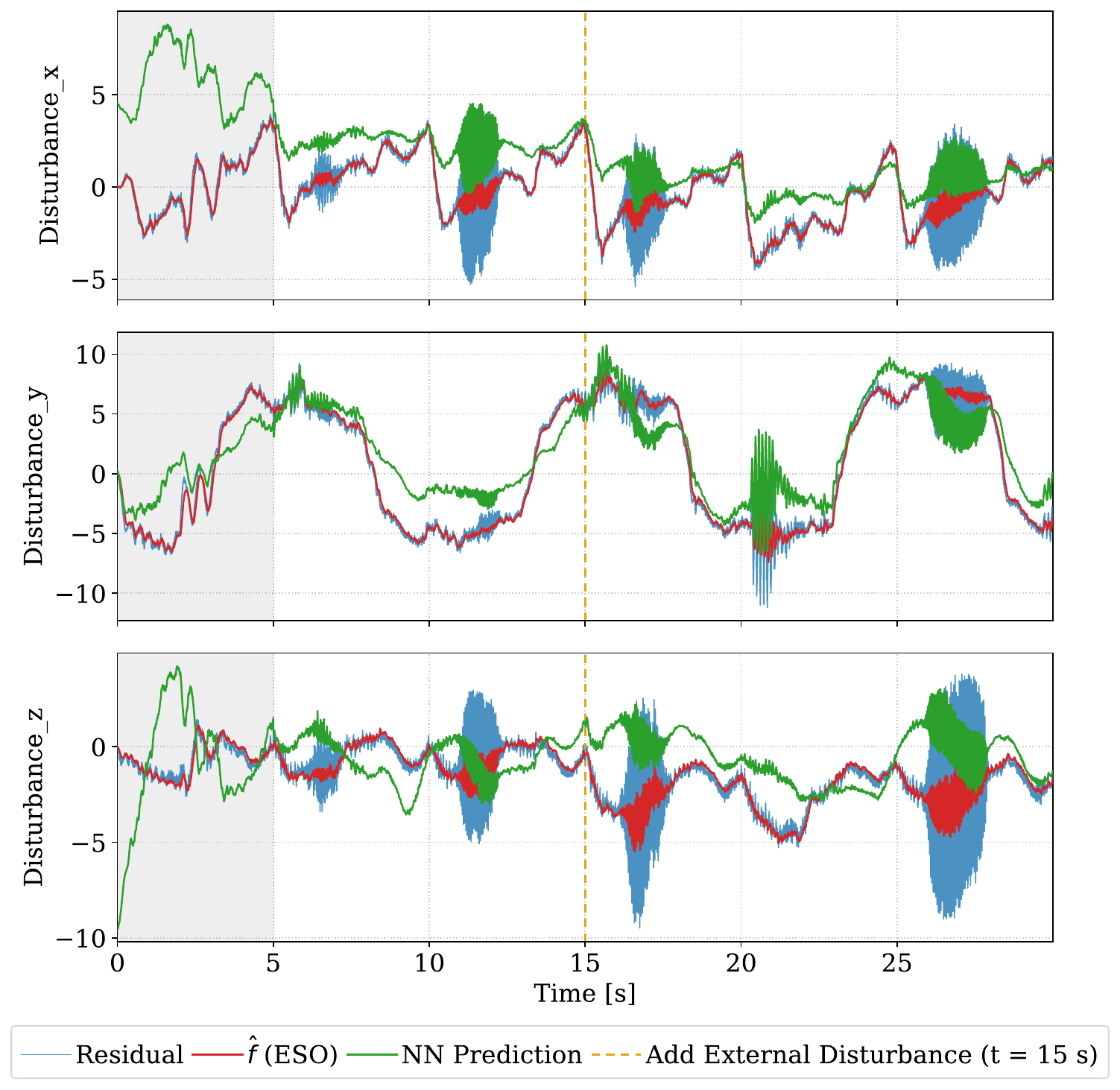}
    \caption{Hardware disturbance estimation under external disturbance (sloshing payload attached at t = 15 s, see the vertical dashed yellow lines). The ramp-in phase in the first 5 seconds is colored in grey.}
    \label{fig:hardware_disturbance}
\end{figure}

%%%%%%%%%%%%%%%%%%%%%%%%%%%%%%%%%%%%%%%%%%%%%%%%%%%%%%%%%%%%%%%%%%%%%%%%%%
\begin{figure*}[h!]
    \centering
    \includegraphics[width=0.9\textwidth]{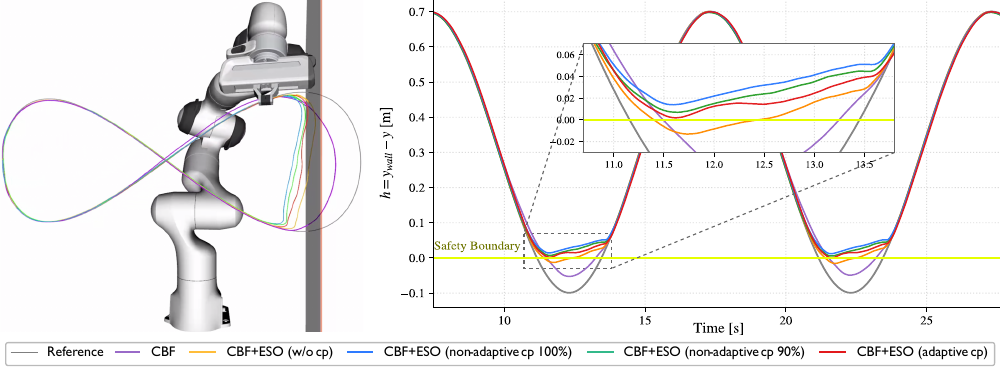}
    \caption{Hardware safety validation. 
    Left: RVIZ visualization of the physical robot and the safety boundary. 
    Right: Safety function $h(x)$.}
    \label{fig:cbf_hw}
\end{figure*}
% The disturbance estimation results are shown in Fig.~\ref{fig:hardware_disturbance}. The trend of the ESO estimate (red) aligns well with the residual (blue); however, the residual is significantly noisier. After the sloshing payload is attached, the variation of the residual increases substantially. The NN estimate (green) tracks the residual trend but with a noticeable bias.

Fig.~\ref{fig:hardware_disturbance} shows the disturbance estimation: the ESO estimate (red) follows the residual (blue) but is significantly less noisy, and the residual variation grows substantially once the payload is attached. The NN estimate (green) tracks the residual trend but with a noticeable bias.

%%%%%%%%%%%%%%%%%%%%%%%%%%%%%%%%%%%%%%%%%%%%%%%%%%%%%%%%%%%%%%%%%%%%%%%%%%
\subsection{Hardware: Safety Evaluation}

% We now evaluate safety performance with the safety specification $y \le 0.3$~m. In this experiment, the disturbance is mainly from the model uncertainty, since model errors in FR3 hardware are much larger than the simulation environment for $\bar{M}(q)$, $\bar{C}(q,\dot{q})$, and $\bar{G}(q)$. We have compared the following methods: (1) Conventional CBF without disturbance compensation; (2) Robust CBF with disturbance compensation using ESO; (3) Robust CBF with disturbance compensation using ESO and fixed disturbance estimation bound; (4) Robust CBF with disturbance compensation using ESO and adaptive disturbance estimation bound (ours).

% As shown in Fig.~\ref{fig:cbf_hw}, the reference trajectory is not strictly followed by any controller with CBF, since the QP-based CBF overrides the nominal control when safety constraints become active. The conventional CBF without disturbance compensation (purple) violates the safety constraint. Incorporating ESO without conformal prediction (yellow) improves safety performance, but violations still occur due to estimation errors. In contrast, the proposed CBF with ESO and adaptive conformal prediction (red) guarantees safety while being less conservative than the versions with fixed 90\% (green) and 100\% (blue) confidence levels.

We evaluate safety under the specification $y \le 0.3$~m, where the disturbance stems mainly from model uncertainty, which is far larger on FR3 hardware than in simulation. We compare the following methods: (1) Conventional CBF without disturbance compensation; (2) Robust CBF with disturbance compensation using ESO; (3) Robust CBF with disturbance compensation using ESO and fixed disturbance estimation bound; (4) Robust CBF with disturbance compensation using ESO and adaptive disturbance estimation bound (ours).
As shown in Fig.~\ref{fig:cbf_hw}, the reference trajectory is not strictly followed by any controller with CBF, since the QP overrides the nominal control whenever the constraint is active. The conventional CBF without disturbance compensation (purple) violates the safety constraint. Incorporating ESO without conformal prediction (yellow) improves safety performance, but violations still occur due to estimation errors. In contrast, the proposed CBF with ESO and adaptive conformal prediction (red) guarantees safety while being less conservative than the versions with fixed 90\% (green) and 100\% (blue) confidence levels.
\begin{figure}[htbp]
    \centering
    \includegraphics[width=1\linewidth]{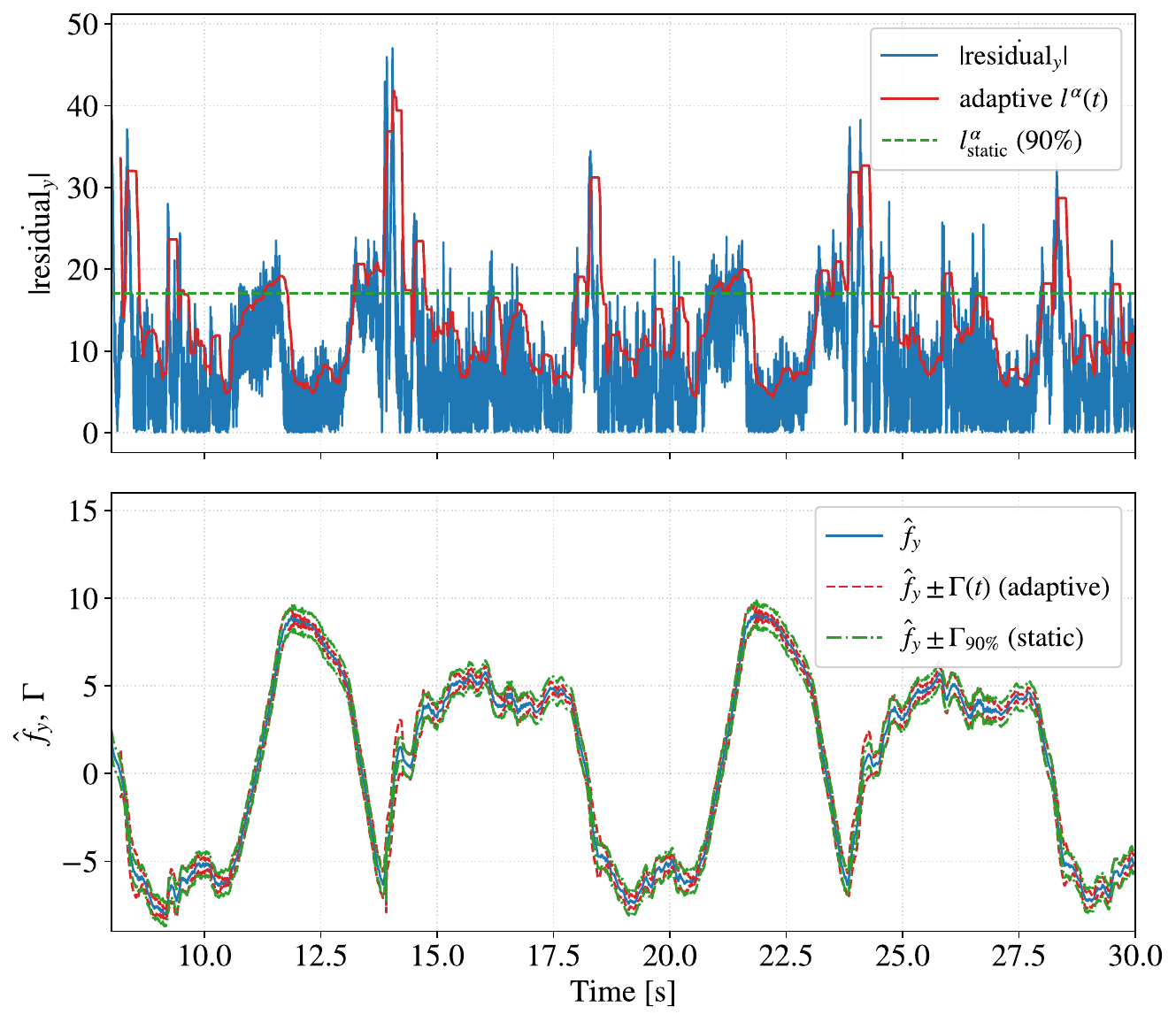}
    \caption{Adaptive conformal disturbance bound $l^\alpha(t)$ 
    and disturbance estimation error.}
    \label{fig:cp_bound}
\end{figure}
We plot the estimated disturbance and the error bounds along the $y$-axis as an example. As shown in the top of Fig.~\ref{fig:cp_bound}, the residual errors (blue) are effectively bounded by the adaptive CP with 90\% confidence (orange), whereas the static CP bound computed from offline data (dashed green) does not adapt to disturbance variance changes. The bottom subplot illustrates how the estimated bound is used to construct upper and lower bounds on the disturbance estimate. The proposed method (red) produces a narrower and less conservative interval between the upper and lower bounds than the static method (green).

%  shows that:

% \begin{itemize}
%     \item The conformal bound adapts online to disturbance intensity.
%     \item Estimation error remains covered by the probabilistic bound.
%     \item The sliding-window mechanism avoids overly conservative constant bounds.
% \end{itemize}

% Necessary signal filtering and parameter tuning are applied to 
% ensure stable quantile estimation.

% This confirms that conformal uncertainty quantification can 
% effectively replace conservative deterministic disturbance bounds 
% in robust CBF design.

%%%%%%%%%%%%%%%%%%%%%%%%%%%%%%%%%%%%%%%%%%%%%%%%%%%%%%%%%%%%%%%%%%%%%%%%%%

%\vspace{-3mm}
\section{Conclusions}

% This paper presented a robust operational space control framework that unifies observer-based disturbance estimation and data-driven uncertainty quantification for safe redundant manipulation. By integrating an extended state observer (ESO) with conformal prediction, the proposed method achieves disturbance compensation and safety enforcement without requiring full-state measurements or conservative a priori disturbance bounds.

% The ESO provides structured dynamic disturbance estimation with explicit convergence guarantees, while the sliding-window conformal mechanism enables online probabilistic estimation of disturbance variation bounds. Embedding the resulting bound into a robust high-order control barrier function yields safety guarantees under both deterministic and probabilistic settings.

% Experimental validation on a 7-DoF Franka Research 3 manipulator demonstrated millimeter-level tracking accuracy and real-time safety enforcement at 1~kHz under significant dynamic disturbances, including sloshing payload injection. Compared to residual learning-based approaches, the proposed framework achieves improved robustness, reduced conservatism, and simplified design complexity.

% Future work will investigate extension to multi-constraint safety scenarios, contact-rich manipulation, and theoretical analysis of conformal bound tightness in closed-loop adaptive systems.
This paper presented a robust operational space control framework that integrates an extended state observer with sliding-window conformal prediction for safe redundant manipulation. The ESO provides disturbance compensation with convergence guarantees, while conformal prediction yields online probabilistic bounds on disturbance variation without conservative prior assumptions. Embedding these bounds into a robust high-order control barrier function ensures safety under uncertainty. Experiments on a 7-DoF Franka Research 3 manipulator demonstrate millimeter-level tracking and real-time (1 kHz) safety enforcement under significant dynamic disturbances, outperforming residual learning approaches in robustness, conservatism, and design simplicity.

%\addtolength{\textheight}{-12cm}   % This command serves to balance the column lengths
                                  % on the last page of the document manually. It shortens
                                  % the textheight of the last page by a suitable amount.
                                  % This command does not take effect until the next page
                                  % so it should come on the page before the last. Make
                                  % sure that you do not shorten the textheight too much.

%%%%%%%%%%%%%%%%%%%%%%%%%%%%%%%%%%%%%%%%%%%%%%%%%%%%%%%%%%%%%%%%%%%%%%%%%%%%%%%%

%%%%%%%%%%%%%%%%%%%%%%%%%%%%%%%%%%%%%%%%%%%%%%%%%%%%%%%%%%%%%%%%%%%%%%%%%%%%%%%%

%%%%%%%%%%%%%%%%%%%%%%%%%%%%%%%%%%%%%%%%%%%%%%%%%%%%%%%%%%%%%%%%%%%%%%%%%%%%%%%%
%\section*{APPENDIX}

%....

%\section*{ACKNOWLEDGMENT}

%....

%%%%%%%%%%%%%%%%%%%%%%%%%%%%%%%%%%%%%%%%%%%%%%%%%%%%%%%%%%%%%%%%%%%%%%%%%%%%%%%%

\bibliographystyle{IEEEtran}
\bibliography{Reference}

\end{document}